\documentclass[letterpaper, 10pt, conference]{ieeeconf}

\usepackage{graphicx}
\usepackage{subfig}
\usepackage{amsmath}
\usepackage{amsthm} 
\usepackage{amssymb}
\usepackage[font=footnotesize]{caption} 
\usepackage{subfig} 
\usepackage[noadjust]{cite} 
\usepackage{color}
\usepackage{algorithm} 
\usepackage{algpseudocode} 
\usepackage{enumerate}
\usepackage[hidelinks,colorlinks=false]{hyperref}
\usepackage[left=0.75in, right=0.75in, top=0.75in, bottom=0.75in]{geometry}
\usepackage{authblk} 

\usepackage{bm}
\usepackage{tikz}
\usetikzlibrary{calc} 
\usetikzlibrary{shapes} 
\usetikzlibrary{chains}
\usetikzlibrary{fit}
\usetikzlibrary{arrows}
\usetikzlibrary{decorations.text} 
\usetikzlibrary{decorations.markings}
\usetikzlibrary{decorations.pathmorphing} 
\usetikzlibrary{shadows}
\usetikzlibrary{patterns}
\usetikzlibrary{matrix}
\usepackage{pgfplots}
\usepackage[europeanresistors]{circuitikz}
\usepackage[outline]{contour} 
\contourlength{1.5pt}

\newtheorem{theorem}{Theorem}

\newcommand{\T}{\mathrm{T}}

\newcommand{\R}{\mathbb{R}}

\newcommand{\sk}[1]{\left[#1\right]_\times} 

\graphicspath{{figures/}}

\begin{document}

\title{Time Derivative of Rotation Matrices: A Tutorial}
\author{Shiyu Zhao
\thanks{Shiyu Zhao is with the Department of Automatic Control and Systems Engineering,
    University of Sheffield, UK.
    {\tt\small szhao@sheffield.ac.uk}}
}
\IEEEoverridecommandlockouts 
\date{}
\maketitle
\begin{abstract}
The time derivative of a rotation matrix equals the product of a skew-symmetric matrix and the rotation matrix itself.
This article gives a brief tutorial on the well-known result.
\end{abstract}

\section{Introduction}

The attitude of a ground or aerial robot is often represented by a rotation matrix, whose time derivative is important to characterize the rotational kinematics of the robot.
It is a well-known result that the time derivative of a rotation matrix equals the product of a skew-symmetric matrix and the rotation matrix itself.
One classic method to derive this result is as follows \cite[Sec~4.1]{bookMurray}, \cite[Sec~2.3.1]{MayiBook}, and \cite[Sec~4.2.2]{bookRobotModelingControl} (see \cite{HamanoRotationDerivative} for other methods).
Let $R(t)\in\R^{3\times 3}$ with $t\ge0$ be a rotation matrix satisfying $R(t)R^\T (t)=I$ for all $t$ where $I$ is the identity matrix.
Taking time derivative on both sides of $R(t)R^\T (t)=I$ gives
$$\dot{R}(t)R^\T (t)+R(t)\dot{R}^\T (t)=0,$$
which indicates that $S(t)\triangleq\dot{R}(t)R^\T (t)$ is a skew-symmetric matrix satisfying $S(t)+S^\T (t)=0$ for all $t$, and consequently
$$\dot{R}(t)=S(t)R(t).$$
The above derivation is simple, but it is not straightforward to see the precise physical meaning of $S(t)$ (though $S(t)$ corresponds to an angular velocity vector, it is unclear which reference frame this vector is expressed in).
This article gives another simple derivation, which is essentially a reorganization of the derivation in \cite{bookMurray,MayiBook,bookRobotModelingControl}, to clarify the precise physical meanings of the quantities in the expression of the time derivative of a rotation matrix.

\emph{Notation:}
For any vector $w=[w_1,w_2,w_3]^\T\in\R^3$, define the skew-symmetric operator $\sk{\cdot}$ as
\begin{align}\label{eq_skewSymmetricOperator}
    \sk{w}\triangleq\left[
      \begin{array}{ccc}
        0 & -w_3 & w_2 \\
        w_3 & 0 & -w_1 \\
        -w_2 & w_1 & 0 \\
      \end{array}
    \right]\in\R^{3\times 3}.
\end{align}
The skew-symmetric operator is useful because it can convert a cross product of two vectors into a matrix-vector product. More specifically, for any $w,x\in\R^3$, it can be easily verified that $w\times x=\sk{w}x$.
Another useful property is that for any $w\in\R^3$ and any rotation matrix $R\in\R^{3\times3}$ satisfying $RR^\T =I$ and $\det(R)=1$ it holds that $\sk{Rw}=R\sk{w}R^\T $ \cite[Section~4.2.1]{bookRobotModelingControl}.

\section{Time Derivative of Rotation Matrices}

Consider two reference frames $A$ and $B$ in the three-dimensional space (see Figure~\ref{fig_demoTwoFramesSameOrigin}).
Assume the origins of the two frames collocate with each other.
Suppose frame $A$ is fixed and frame $B$ is rotating.
In the area of robotics, frame $A$ usually corresponds to the \emph{world frame} fixed on the ground, and frame $B$ usually corresponds to the \emph{body frame} attached to the body of a robot.

\begin{figure}
  \centering
  \def\myscale{0.45}
  \def\length{4}
\begin{tikzpicture}[scale=\myscale,>=stealth,every shadow/.style={shadow scale=1.2, shadow xshift=1pt, shadow yshift=-1pt,fill=black!70!white, path fading={circle with fuzzy edge 15 percent}}]

\begin{scope}[shift={(0,0)},rotate=0]
\coordinate (A)  at (0,0);
\coordinate (Ax) at (-\length*0.6,-\length*0.5);
\coordinate (Ay) at (\length,0);
\coordinate (Az) at (0,\length);
\draw[->,thick] (A)--(Ax) node[left] {$x_A$};
\draw[->,thick] (A)--(Ay) node[right] {$y_A$};
\draw[->,thick] (A)--(Az) node[above] {$z_A$};
\end{scope}

\begin{scope}[shift={(0,0)},rotate=-25]
\coordinate (B)  at (0,0);
\coordinate (Bx) at (-\length*0.6,-\length*0.5);
\coordinate (By) at (\length,0);
\coordinate (Bz) at (0,\length);
\draw[->,thick,blue] (B)--(Bx) node[left] {$x_B$};
\draw[->,thick,blue] (B)--(By) node[right] {$y_B$};
\draw[->,thick,blue] (B)--(Bz) node[above] {$z_B$};

\coordinate (w) at (\length*0.6,\length*0.8);
\draw[-angle 90,blue,dashed] (B)--(w) node[right]{$w_B$};
\node [circle,minimum size=8,draw=none] (temp) at ($(w)+(-1,-1)$) {};
\path (temp) edge[loop,out=-30,in=60,looseness=6,-angle 90,blue]  (temp);

\end{scope}

\end{tikzpicture}
  \vspace{-15pt}
  \caption{Frame $A$ is fixed while frame $B$ is rotating.}
  \label{fig_demoTwoFramesSameOrigin}
\end{figure}
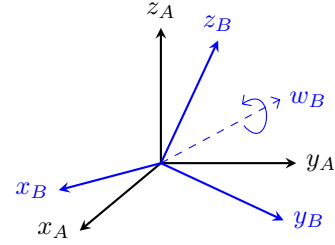

In the sequel, the time variables of all the matrices and vectors are omitted for the sake of simplicity.
Let the rotation matrix $R_B^A\in\R^{3\times 3}$, which satisfies $(R_B^A)^{-1}=(R_B^A)^\T $ and $\det(R_B^A)=1$, represent the rotational transformation from frame $B$ to frame $A$.
For any point in the space, suppose $P_A\in\R^3$ and $P_B\in\R^3$ are its coordinates expressed in frames $A$ and $B$, respectively, then $$P_A=R_B^AP_B.$$
Let $R_A^B=(R_B^A)^\T $ be the rotation from frame $A$ to frame $B$.

Suppose $w_B\in\R^3$ is the angular velocity of frame $B$ (relative to frame $A$) expressed in frame $B$.
The vector $w_B$ quantifies the rotational movement of frame $B$: $\|w_B\|$ equals to the angular rate, which quantifies how fast frame $B$ is rotating, and $w_B/\|w_B\|$ indicates the axis of the rotational movement.
Since the angular velocity is a vector, it can also be expressed in frame $A$ as $w_A\in\R^3$, which satisfies $$w_A=R_B^Aw_B.$$

The following is the main result on the relation between rotational transformations and angular velocities.

\begin{theorem}[\textbf{Time Derivative of Rotation Matrices}]\label{thm_mainResult}
The time derivative of the rotational transformations $R_B^A$ and $R_A^B$ are expressed as
\begin{align}
\label{eq_rotationTimeDerivative1} \dot{R}_B^A&=[w_A]_{\times}R_B^A \\
\label{eq_rotationTimeDerivative2} \dot{R}_B^A&=R_B^A[w_B]_{\times}\\
\label{eq_rotationTimeDerivative3} \dot{R}_A^B&=-R_A^B[w_A]_{\times}\\
\label{eq_rotationTimeDerivative4} \dot{R}_A^B&=-[w_B]_{\times}R_A^B
\end{align}
\end{theorem}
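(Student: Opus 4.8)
The plan is to derive \eqref{eq_rotationTimeDerivative1} from the physical definition of the angular velocity and then obtain the remaining three identities by purely algebraic manipulation. First I would fix a material point rigidly attached to the rotating frame $B$, so that its coordinates $P_B\in\R^3$ in frame $B$ are constant in time while its coordinates in frame $A$ evolve according to $P_A=R_B^AP_B$. Differentiating this relation and using $\dot{P}_B=0$ gives $\dot{P}_A=\dot{R}_B^AP_B$.

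The crucial physical input is the standard kinematic relation for a point undergoing pure rotation about the common origin: its velocity expressed in the fixed frame equals the cross product of the angular velocity (in the same frame) with its position, i.e.\ $\dot{P}_A=w_A\times P_A=\sk{w_A}P_A$. Equating the two expressions for $\dot{P}_A$ yields $\dot{R}_B^AP_B=\sk{w_A}R_B^AP_B$ for every constant $P_B\in\R^3$. Since $P_B$ is arbitrary and two matrices that agree on all vectors are equal, this establishes \eqref{eq_rotationTimeDerivative1}.

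To pass to \eqref{eq_rotationTimeDerivative2}, I would invoke the conjugation identity $\sk{Rw}=R\sk{w}R^\T$ from the notation section together with $w_A=R_B^Aw_B$, giving $\sk{w_A}=R_B^A\sk{w_B}(R_B^A)^\T$; substituting into \eqref{eq_rotationTimeDerivative1} and cancelling $(R_B^A)^\T R_B^A=I$ produces \eqref{eq_rotationTimeDerivative2}. For the last two identities I would differentiate the orthogonality relation $R_B^AR_A^B=I$ to obtain $\dot{R}_A^B=-R_A^B\dot{R}_B^AR_A^B$, and then substitute \eqref{eq_rotationTimeDerivative1} and \eqref{eq_rotationTimeDerivative2} in turn, each time cancelling a product $R_B^AR_A^B=I$ or $R_A^BR_B^A=I$, to yield \eqref{eq_rotationTimeDerivative3} and \eqref{eq_rotationTimeDerivative4} respectively.

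The only genuinely nontrivial step is the second one: justifying $\dot{P}_A=w_A\times P_A$. This is where the physical definition of the angular velocity enters, and it is exactly the point the tutorial aims to make transparent: the cross product is taken in frame $A$ precisely because both $\dot{P}_A$ and $P_A$ are frame-$A$ quantities, which is what pins down the reference frame of the angular velocity appearing in \eqref{eq_rotationTimeDerivative1}. Everything after that is bookkeeping with the orthogonality and conjugation properties already stated in the excerpt.
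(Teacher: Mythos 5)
Your proposal is correct and takes essentially the same route as the paper: \eqref{eq_rotationTimeDerivative1} is established by the identical fixed-point kinematic argument ($\dot{P}_B=0$, $\dot{P}_A=\sk{w_A}P_A$, arbitrariness of $P_B$), and \eqref{eq_rotationTimeDerivative2} by the same conjugation identity $\sk{R_B^Aw_B}=R_B^A\sk{w_B}(R_B^A)^\T$. The only (immaterial) difference is that you obtain \eqref{eq_rotationTimeDerivative3} and \eqref{eq_rotationTimeDerivative4} by differentiating $R_B^AR_A^B=I$, whereas the paper simply transposes \eqref{eq_rotationTimeDerivative1} and then substitutes the conjugation identity; both are equivalent one-line manipulations.
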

\begin{proof}
We first prove \eqref{eq_rotationTimeDerivative1}.
Consider an arbitrary point fixed in frame $B$.
If $P_A$ and $P_B$ are the coordinates of this point in frames $A$ and $B$, respectively, then $P_B$ is constant since the point is fixed in frame $B$, and $P_A$ is time-varying since frame $B$ is rotating.
As a result, we have $\dot{P}_B=0$.
Taking time derivative on both sides of $P_A=R_B^A P_B$ yields
\begin{align}\label{eq_proofEq1}
\dot{P}_A=\dot{R}_B^A P_B.
\end{align}
On the other hand, by the relation between linear and angular velocities, we have
\begin{align}\label{eq_proofEq2}
\dot{P}_A=w_A\times P_A=[w_A]_{\times}P_A.
\end{align}
Substituting \eqref{eq_proofEq2} into \eqref{eq_proofEq1} gives
\begin{align}\label{eq_proofEq3}
\dot{R}_B^A P_B=[w_A]_{\times}P_A=[w_A]_{\times}R_B^A P_B.
\end{align}
Since $P_B$ may be arbitrarily chosen, equation~\eqref{eq_proofEq3} holds for arbitrary $P_B\in\R^3$ and hence implies \eqref{eq_rotationTimeDerivative1}.

Equations \eqref{eq_rotationTimeDerivative2}--\eqref{eq_rotationTimeDerivative4} can be derived from \eqref{eq_rotationTimeDerivative1}.
In particular, by the property of the skew-symmetric operator, we have
\begin{align}\label{eq_wAwBconvert}
\sk{w_A}=\sk{R_B^Aw_B}=R_B^A\sk{w_B}(R_B^A)^\T .
\end{align}
Substituting \eqref{eq_wAwBconvert} into \eqref{eq_rotationTimeDerivative1} yields $\dot{R}_B^A=\sk{w_A}R_B^A=R_B^A\sk{w_B}(R_B^A)^\T R_B^A=R_B^A\sk{w_B}$, which is \eqref{eq_rotationTimeDerivative2}.
Taking transpose on both sides of \eqref{eq_rotationTimeDerivative1} gives $\dot{R}_A^B=-R_A^B\sk{w_A}$, which is \eqref{eq_rotationTimeDerivative3}.
By substituting \eqref{eq_wAwBconvert} into \eqref{eq_rotationTimeDerivative3} leads to \eqref{eq_rotationTimeDerivative4}.
\end{proof}

As indicated by Theorem~\ref{thm_mainResult}, the expression of the time derivative depends on the definitions of the rotation transformation and angular velocity.
One should be clear about their physical meanings before applying \eqref{eq_rotationTimeDerivative1}--\eqref{eq_rotationTimeDerivative4}.

\section{Practical Consideration in Robotic Motion}

For a robot equipped with an inertial measurement unit (IMU), the value of $w_B$, which is the angular velocity of the robot relative to the world frame expressed in its body frame, can be directly measured.
As a result, equations \eqref{eq_rotationTimeDerivative2} and \eqref{eq_rotationTimeDerivative4}, i.e.,
\begin{align*}
\dot{R}_B^A&=R_B^A[w_B]_{\times}\\
\dot{R}_A^B&=-[w_B]_{\times}R_A^B
\end{align*}
are particularly useful in practice.

It must be noted that the origins of frames $A$ and $B$ are assumed to collocate with each other in Theorem~\ref{thm_mainResult}.
This assumption is, however, usually not satisfied for moving robots because the body frame may translate in the space (see Figure~\ref{fig_demoTwoFrames}).
Nevertheless, \eqref{eq_rotationTimeDerivative2} and \eqref{eq_rotationTimeDerivative4} still holds in this case. To prove that, we may introduce an intermediate frame $A'$ whose axes are parallel to those of frame $A$ and origin collocates with the origin of frame $B$.
By considering frames $A'$ and $B$, we have $\dot{R}_B^{A'}=R_B^{A'}[w_B]_{\times}$ and $\dot{R}_{A'}^B=-[w_B]_{\times}R_{A'}^B$.
Since the axes of frame $A'$ are parallel to those of frame $A$, we always have $R_B^{A'}=R_B^A$ and $R_{A'}^B=R_{A}^B$ and consequently \eqref{eq_rotationTimeDerivative2} and \eqref{eq_rotationTimeDerivative4} still holds (note $w_B$ remain the same).
On the other hand, if the origins of frames $A$ and $B$ do not collocate, equations \eqref{eq_rotationTimeDerivative1} and \eqref{eq_rotationTimeDerivative3} do not hold any more because $w_A'\ne w_A$ due to the nonzero translation between frames $A'$ and $A$.
With the above discussion, we know equations \eqref{eq_rotationTimeDerivative2} and \eqref{eq_rotationTimeDerivative4} are more useful than equations \eqref{eq_rotationTimeDerivative1} and \eqref{eq_rotationTimeDerivative3} in practice.

\begin{figure}
  \centering
  \def\myscale{0.45}
  \includegraphics[width=0.6\linewidth]{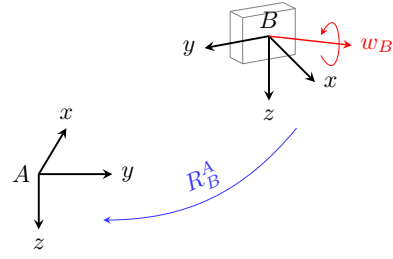}
  \caption{The world frame $A$ and body frame $B$ for a robot moving in the three-dimensional space.}
  \label{fig_demoTwoFrames}
\end{figure}

\begin{figure}
  \centering
  \def\myscale{0.45}
  \includegraphics[width=0.6\linewidth]{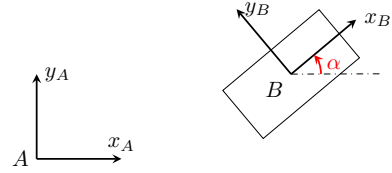}
  \caption{The world frame $A$ and body frame $B$ for a robot moving in the plane.}
  \label{fig_demoTwoFramesPlane}
\end{figure}

If a robot is moving in the plane, the rotation (or orientation) of the robot can be represented by a single angle $\alpha$ (see Figure~\ref{fig_demoTwoFramesPlane}).
Then, the rotation transformation from frame $B$ to frame $A$ is
\begin{align*}
R_\alpha=\left[
           \begin{array}{cc}
             \cos\alpha & -\sin\alpha \\
             \sin\alpha & \cos\alpha \\
           \end{array}
         \right].
\end{align*}
In order to verify $R_\alpha$, one may examine some specific points in frame $B$ such as $e_1=[1,0]^\T $ and $e_2=[0,1]^\T $.
Taking time derivative on both sides of $R_\alpha$ gives
\begin{align}\label{eq_RalphaTimeDot}
\dot{R}_\alpha=\left[
           \begin{array}{cc}
             -\sin\alpha & -\cos\alpha \\
             \cos\alpha & -\sin\alpha \\
           \end{array}
         \right]\dot{\alpha}.
\end{align}
Expression \eqref{eq_RalphaTimeDot} may also be obtained as a special case of \eqref{eq_rotationTimeDerivative2}.
In particular, we can consider the three-dimensional frame with the $z$-axes pointing out of the paper in Figure~\ref{fig_demoTwoFramesPlane}.
Then,
\begin{align*}
R_B^A=\left[
  \begin{array}{ccc}
    \cos\alpha & -\sin\alpha & 0 \\
    \sin\alpha & \cos\alpha & 0 \\
    0 & 0 & 1 \\
  \end{array}
\right].
\end{align*}
The angular velocity of the robot can be expressed as $w_A=w_B=\dot{\alpha} e_3$ where $e_3=[0,0,1]^\T $.
By applying \eqref{eq_rotationTimeDerivative1} or \eqref{eq_rotationTimeDerivative2}, it is straightforward to obtain \eqref{eq_RalphaTimeDot}.
\bibliography{myOwnPub,zsyReferenceAll} 
\bibliographystyle{ieeetr}

\end{document}